\newtheorem{problem}{Problem}
\newtheorem{theorem}{Theorem}
\begin{document}

\title{Smoothed Hierarchical Dirichlet Process: A Non-Parametric Approach to Constraint Measures}

\author{Cheng Luo,
        Yang Xiang
        and Richard Yi Da Xu}

\maketitle

\begin{abstract}
Time-varying mixture densities occur in many scenarios, for example, the distributions of keywords that appear in publications may evolve from year to year, video frame features associated with multiple targets may evolve in a sequence. Any models that realistically cater to this phenomenon must exhibit two important properties:  the underlying mixture densities must have an unknown number of mixtures; and there must be some ``smoothness'' constraints in place for the adjacent mixture densities. The traditional Hierarchical Dirichlet Process (HDP) may be suited to the first property, but certainly not the second. This is due to how each random measure in the lower hierarchies is sampled independent of each other and hence does not facilitate any temporal correlations. To overcome such shortcomings, we proposed a new Smoothed Hierarchical Dirichlet Process (sHDP). The key novelty of this model is that we place a temporal constraint amongst the nearby discrete measures $\{G_j\}$ in the form of symmetric Kullback-Leibler (KL) Divergence with a fixed bound $B$. Although the constraint we place only involves a single scalar value, it nonetheless allows for flexibility in the corresponding successive measures. Remarkably, it also led us to infer the model within the stick-breaking process where the traditional Beta distribution used in stick-breaking is now replaced by a new constraint calculated from $B$. We present the inference algorithm and elaborate on its solutions. Our experiment using NIPS keywords has shown the desirable effect of the model.

\textbf{Key-words: } Bayesian non-parametric, smoothed HDP, Bayesian inference, particle filtering, truncated Beta distribution.
\end{abstract}

\section{Introduction}

The Hierarchical Dirichlet Process (HDP) is an extension to the traditional Dirichlet process (DP), which essentially is comprised of a set of measures responsible for generating observations in each of its respective groups. The model allows the lower hierarchies of different groups to share ``atoms" of their parents. They allow practitioners to apply the model to scenarios where independent mixture densities (of each group) share certain mixture components amongst its siblings which are inherited from their parents.

HDP is proposed by \cite{Cai2013} and further studied by \cite{Teh2008}\cite{Wang2011}\cite{Paisley2009}\cite{Hoffman2012} and many other literatures in various disciplines have used it as an infinite version of Latent Dirichlet Allocation (LDA) \cite{Blei2012}. In order to perform inference, \cite{Cai2013} proposed three Monte-Carlo Markov Chain (MCMC) methods, two of which adopted collapsed Gibbs with $G_0$ and $G_j$ integrated out while the third one instantiated $G_0$ and integrated $G_j$. The other works focus on the variational inference of HDP.

HDP has been successfully extended to model sequential data, in the form of so called HDP-HMM \cite{Cai2013}\cite{Wang2011}\cite{Hoffman2012} and infinite HMM \cite{Beal2002}.  In these models, it is assumed that there exists a set of ``time invariant measures''; A series of latent states are then drawn from these time invariant measures. The index for which a measure is to be used at time $t$, is determined by the state of the previous state, i.e., $p(z_t | z_{t-1}, {G_1, \dots G_k}) = G_{z_{t-1}}$.  In order to cater for a ``smooth'' transition of states, \cite{Fox2009} proposed the so-called `sticky' HDP-HMM method. This approach adds a weight for the self-transition bias and places a separate prior on this prior. While such methods can be simpler in terms of inference, and may make more sense in situations where the states distributions may repeat at some point in time, they  are not suitable for scenarios where the underlying distribution does not repeat in cycles, i.e., there is a need for a unique distribution to be assigned at each time $t$. One example of this is the distribution of topics for a scientific journal, it is highly unlikely that its topic distributions will ``come'' back at some years in the future. 

In order to generate a set of non-repeating and time-varying measures associated with observations at time $t$, \cite{Lin2010} proposed a Bayesian non-parametric approach in the form of Dependent Dirichlet processes. Based on an initial Dirichlet process, successive measures can be formed through three operations: superposition, subsampling and point transition. However, each of its operations can be very restrictive. For example, in sub-sampling, the ``next'' measure must have a fewer number of ``sticks'' than the first. The full flexibilities can only be achieved using combinations of all three operations, which makes the method very complex.
	
In this paper, we propose an alternative method to construct a time-varying dependent Dirichlet processes. Our model uses a very simple constraint in which the ``smoothness'' of the mixing measures in the second layer of HDP is achieved by simply placing a symmetric KL divergence constraint between them which is bounded by some fixed value $B$. The key motivation is achieved through the following observation: Using our KL constraint, we can achieve our intended outcome by substituting the Beta distribution with a truncated Beta while still using the stick-breaking paradigm. The new truncation is calculated from any $B$ valued placed, and we observed that there are only a finite number of possible solution spaces for the truncations. Subsequently, we developed a sampling method using a Gibbs Sampling framework, where one of the Gibbs steps is achieved using particle filtering. We named our method Smoothed HDP (sHDP). Since we only applied a simple scalar value for its constraint, we argue it is the non-parametric approach to place constraints amongst Dirichlet Processes. 

This rest of the paper is organised as follows: The second section describes the background of our model, and for completeness we include the related inference method. The third section is the elaborated description of our model. In section 4 we give the details of the inference method. In section 5, we test the model using synthetic data sets and the PAMI article key words. Section 6 provides a conclusion and some further discussions.

\section{Background}
	
To make this paper self-contained, we describe in this paper, some of the related models and their associated inference methods. We adopt the stick breaking paradigm for the representation of a Dirichlet process for both $G_0$ and $G_j$.
In terms of inference, given both $G_0$ and $G_j$, particle filtering is used instead of Gibbs sampling to infer $G_{j+1}$ since the conditional distribution, $\mathbb{P}(G_{j+1} | G_0, G_j )$ cannot be obtained analytically. We use the slice sampling proposed by \cite{Damien2001} for the sampling of the truncated Beta distribution.

	\subsection{The hierarchical Dirichlet process}
	
	Let $H$ be a diffusion measure (a measure having no atoms a.s.) \cite{Klenke2008} and $\gamma$ be a scalar, and a discrete measure $G_0$ is sampled from  $\mathrm{DP}(\gamma, H)$ \cite{Sethuraman1994}. Setting $G_0$ as the base measure and $\alpha>0$ be the concentration parameter, we sample $G_j \sim \mathrm{DP}(\alpha, G_0)$ for $j = 1,...,M$. Obviously, $G_j$ should be discrete and some of the activated atoms for different $G_j$ are shared. We the adopt stick-breaking paradigm for the representation of a Dirichlet process, and denote 
	\begin{equation}\label{eq: G_0}
		G_0 = \sum_{i =1}^\infty \pi_i \delta_{\phi_i},
	\end{equation}
	where $\delta$ is the Dirac function,
	$$
	\begin{aligned}
		\tilde{\pi}_i &\stackrel{i.i.d.}{\sim} \mathrm{Beta}(1, \gamma), \quad \pi_i = \tilde{\pi}_i\prod_{l=1}^{i-1}\tilde{\pi}_l, \\
		\phi_i &\stackrel{i.i.d.}{\sim} H,
	\end{aligned}
	$$
	and $\tilde{\pi}_i$ and $\phi_i$ are mutually independent. As proposed by \cite{Cai2013}, $G_j$ can be formulated as
	\begin{equation}\label{eq: G_j}
		G_j = \sum_{i = 1}^{\infty} \beta_i \delta_{\phi_i}, \quad \textrm{for } j = 1,...,M,
	\end{equation}
	where 
	\begin{align}\label{eq: sample beta origin}
		\tilde{\beta}_i \sim \mathrm{Beta}\left(\alpha \pi_i, \alpha\left( 1 - \sum_{l = 1}^{i} \pi_l \right) \right), \quad \beta_i = \tilde{\beta}_i \prod_{l=1}^{i-1}(1-\tilde{\beta}_l).
	\end{align}
	This formulation has an advantage in that the atoms in (\ref{eq: G_j}) are distinct. 
	
	\subsection{Posterior sampling of the GEM distribution}
	
The	Chinese restaurant process is famous in the posterior sampling of the Dirichlet process. However, since our model requires to sample $G_0$ and $G_j$ explicitly, we use the method proposed by \cite{Ishwaran2001}. Denoting by $m_i$ the number of observations equal to $i$, the posterior of $G_0$ is drawn as
	\begin{equation}\label{eq:sample G_0}
		\begin{aligned}
		\tilde{\pi}_i &\sim \mathrm{Beta}\left(1 + m_i, \gamma + \sum_{l = i+1}^{\infty} m_l\right), \\
		\pi_i &= \tilde{\pi}_i \prod_{l = 1}^{i-1} (1 - \tilde{\pi}_l).
		\end{aligned}
	\end{equation}

	\subsection{Slice sampling of a truncated Beta distribution}
	
	For the sampling of the truncated Beta distribution, we use slice sampling proposed by \cite{Damien2001}. Let $0 \leq a < b \leq 1$ and $\alpha, \beta > 0$ be scalars and the density of $X$ is 
	$$
	f(x|a,b,\alpha,\beta) = \frac{\Gamma(\alpha+\beta)}{\Gamma(\alpha)\Gamma(\beta)}x^{\alpha-1}(1-x)^{\beta-1}\delta(a<x<b),
	$$
    where $\delta(\cdot)$ is the Dirac function.To sample from this truncated Beta distribution, we add auxiliary variable $U$ and construct a joint density of $X, U$ as
	$$
	p(x,u) \propto x^{\alpha - 1}\delta(0 < u < (1 - x)^{\beta - 1}, a < x < b).
	$$
	The method of drawing of $x$ and $u$ is: 
	\begin{enumerate}
		\item Sampling $u$ conditional on $x$ 
		$$
		u \sim \mathrm{Uniform}(0, (1 - x)^{\beta - 1}).
		$$
		\item Sampling $x$ conditional on $u$ using inverse transform
		\begin{enumerate}
			\item if $\beta > 1$
			$$
			p(x) \propto x^{\alpha - 1}\delta(a < x < \min\{1 - u^{1/(\beta-1)}, b\}),
			$$
			\item if $\beta < 1$
			$$
			p(x) \propto x^{\alpha - 1}\delta(\max\{1 - u^{1/(\beta - 1)},a\} < x < b).
			$$
		\end{enumerate}		
	\end{enumerate}
	
	\subsection{Particle filtering}
	
	Particle filtering is used to sample the posterior distribution of sequential random variables. The advantage of particle filtering is that sampling is possible when the posterior density cannot be stated analytically. Let $\{z_t:t = 1,...,T\}$ be latent random variables and $\{x_t:t=1,...,T\}$ be observed random variables. The joint distribution of them is
	$$
	p(x_{1:t}, z_{1:t}) \propto p(z_1)\prod_{t=1}^{T-1} p(z_{t+1}|z_t) \prod_{t=1}^T p(x_t|z_t).
	$$ 
	Let $N$ be a positive integer large enough, and the posterior sampling method is:
	\begin{enumerate}
	    \item For $i = 1,...,N$, propose $z^{(i)}_1$ from prior $p(z_1)$, and compute weights $w^{(i)} = p(x_1|z_1)$.
	    \item Resample $\boldsymbol{z}_1$ with weights $\boldsymbol{w}$, where $\boldsymbol{z}_1$ denotes the vector of $(z^{(i)}_1)_{i=1,...,N}$ and $\boldsymbol{w}$ denotes $(w^{(i)})_{i=1,...,N}$.
	    \item For $t=2,...,T$,
	    \begin{enumerate}
	    	\item propose $z^{(i)}_t$ from $p(z_{t+1}|z_t)$, and compute weights $w^{(i)} = p(x_t|z_t)$.
	    	\item resample $\boldsymbol{z}_t$ with weights $\boldsymbol{w}$.
	    \end{enumerate}
	\end{enumerate}
	The empirical posterior distribution of $(z_t)_{t=1,...,T}$ is 
	\begin{equation}\label{eq:particle posterior}
		p(z_{1:T}) = \frac{1}{N}\sum_{i=1}^{N} \delta_{z^{(i)}_{1:T}}.
	\end{equation}
	For a fully description of particle filtering please refer to \cite{Doucet2011}.

\section{The model of the smoothed HDP}
	
	In this section, we will give a full description of our model. In the traditional HDP, the discrete measures in the lower level, i.e., $G_j$, are independent given the concentration parameter  $\alpha$ and the base measure $G_0$. Separately from this independent assumption,  we force the successive mixing measures $G_{j}$ and $G_{j+1}$ to be alike in terms of their symmetric KL divergence. Since both $G_j$ and $G_{j+1}$ are discrete measures with infinite many atoms, the computation of symmetric KL divergence between them is intractable. As a substitution, we propose an aggregated form of symmetric KL divergence that is computable. It can be proved that our aggregated symmetric KL divergence has the same expectation as the original form although they are not equal all the time.

	\subsection{The aggregated symmetric KL divergence}
	
	Let $\gamma,\alpha > 0$ be scalars and $H$ be a diffusion measure, we sample a discrete base measure $G_0$ distributed as  $\mathrm{DP}(\gamma, H)$. Then the mixing measures ($G_j$) in the lower level are sampled from 
	$$
	G_j \sim \mathrm{DP}(\alpha, G_0), \quad \textrm{for }j = 1,...,M,
	$$ 
	with a constraint that the symmetric KL divergence $\mathrm{KL}(G_{j+1}||G_j) < B$ for some fixed positive scalar $B$. Suppose $G_0$ and $G_j$ have the following expression
	$$
	\begin{aligned}
	G_0 =\sum_{i=1}^{\infty} \pi_i \delta_{\phi_i}, \quad 
	G_j = \sum_{i=1}^{\infty} \beta_{j,i} \delta_{\phi_i}.
	\end{aligned}
	$$
	The symmetric KL divergence between $G_j$ and $G_{j+1}$ is defined to be
	\begin{small}
	\begin{equation}\label{eq: symm KL}
		\mathrm{KL}(G_{j}||G_{j+1}) = \sum_{i=1}^{\infty} \beta_{j,i} \log \frac{\beta_{j,i}}{\beta_{j+1,i}} + \sum_{i=1}^{\infty} \beta_{j+1,i} \log \frac{\beta_{j+1,i}}{\beta_{j,i}}.
		\end{equation}
	\end{small}
	
	Our problem is summarised as 
    \begin{problem}\label{problem : 1}
    	Given $G_0$ and $G_j$, 
    	$$
    	\begin{aligned}
    	&\textrm{sample } & &G_{j+1} \sim \mathrm{DP}(\alpha, G_0),\\
    	&\textrm{subject to } & & \mathrm{KL}(G_{j}||G_{j+1}) \leq B,
    	\end{aligned}
    	$$
    	where $B > 0$ is a scalar.
    \end{problem}
    
    However, since (\ref{eq: symm KL}) is a sum of infinite terms, the direct computation of (\ref{eq: symm KL}) is intractable. Alternatively, for every atom $l$, we define an aggregated symmetric KL divergence as
    \begin{equation}\label{eq: aggregate kl}
    \begin{aligned}
    &~~~~\mathrm{aggKL}(l;G_j||G_{j+1})\\
     &= \left(\sum_{i=1}^{l-1} \beta_{j,i}\right) \log \frac{\sum_{i=1}^{l-1}\beta_{j,i}}{\sum_{i=1}^{l-1}\beta_{j+1,i}}  \\
     &~~+\beta_{j,l}\log \frac{\beta_{j,l}}{\beta_{j+1,l}} + \left(\sum_{i=l+1}^{\infty} \beta_{j,i}\right) \log \frac{\sum_{i=l+1}^{\infty}\beta_{j,i}}{\sum_{i=l+1}^{\infty}\beta_{j+1,i}} \\
    & ~~+ \left(\sum_{i=1}^{l-1} \beta_{j+1,i}\right) \log \frac{\sum_{i=1}^{l-1}\beta_{j+1,i}}{\sum_{i=1}^{l-1}\beta_{j,i}} + \beta_{j+1,l}\log \frac{\beta_{j+1,l}}{\beta_{j,l}}  \\
    & ~~+\left(\sum_{i=l+1}^{\infty} \beta_{j+1,i}\right) \log \frac{\sum_{i=l+1}^{\infty}\beta_{j+1,i}}{\sum_{i=l+1}^{\infty}\beta_{j,i}}.
    \end{aligned}	
    \end{equation}    
    Hence Problem \ref{problem : 1} changes to 
    \begin{problem}\label{problem : 2}
    	Given $G_0$ and $G_j$, for $l = 1,2,3,...$
    	\begin{equation} \label{codition 2}
   	    \begin{aligned}
    	&\textrm{sample } & & G_{j+1} \sim \mathrm{DP}(\alpha, G_0), \\
    	&\textrm{subject to} & & \mathrm{aggKL}(l;G_{j}||G_{j+1}) \leq B^*,
    	\end{aligned}
    	\end{equation}

    	where $B^* > 0$ is a scalar.
    \end{problem}
	
	The simplest way to set the value of $B^*$ is letting $B^* = B$. But this does not make sense unless $\mathrm{KL}(G_j||G_{j+1}) = \mathrm{aggKL}(G_j||G_{j+1})$. However, direct algebraic calculation shows that the these two symmetric KL divergences are not agree at all times. Fortunately, the next theorem shows that the two terms,  $\mathrm{KL}(G_j||G_{j+1})$ and $\mathrm{aggKL}(G_j||G_{j+1})$,   take the same value on average. 
	
	\begin{theorem}\label{thm: 1}
		The expectation of $\mathrm{KL}(G_j||G_{j+1})$ and $\mathrm{aggKL}(l;G_{j}||G_{j+1})$ agrees with respect to $\boldsymbol{\beta}_j, \boldsymbol{\beta}_{j+1}$, where $\boldsymbol{\beta}_j = (\beta_{j,i})_{i=1,2,...}$ and $\boldsymbol{\beta}_{j+1} = (\beta_{j+1,i})_{i=1,2,...}$.
	\end{theorem}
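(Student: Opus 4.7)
The plan is to condition on the base random measure $G_0=\sum_i \pi_i\delta_{\phi_i}$ and exploit the standard aggregation property of the Dirichlet process. Conditional on $G_0$, the two weight vectors $\boldsymbol{\beta}_j$ and $\boldsymbol{\beta}_{j+1}$ are i.i.d.\ (infinite) Dirichlet vectors with parameters $(\alpha\pi_i)_i$, so each coordinate is marginally $\mathrm{Beta}(\alpha\pi_i,\alpha(1-\pi_i))$, and for any fixed $l$ the three aggregated masses $\sum_{i<l}\beta_{j,i}$, $\beta_{j,l}$, $\sum_{i>l}\beta_{j,i}$ are jointly $\mathrm{Dirichlet}(\alpha\pi_{<l},\alpha\pi_l,\alpha\pi_{>l})$, where $\pi_{<l}:=\sum_{i<l}\pi_i$ and $\pi_{>l}:=\sum_{i>l}\pi_i$. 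The same aggregation statement applies to $\boldsymbol{\beta}_{j+1}$ independently, so both \eqref{eq: symm KL} and \eqref{eq: aggregate kl} become functionals of two i.i.d.\ Beta-marginal pairs that share a common total concentration $A=\alpha$.

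Next, for a generic pair of independent $X,Y\sim\mathrm{Beta}(a,A-a)$ I would use the standard identities $\mathbb{E}[\log X]=\psi(a)-\psi(A)$ and $\mathbb{E}[X\log X]=\tfrac{a}{A}[\psi(a+1)-\psi(A+1)]$, together with independence, to get
\begin{equation*}
\mathbb{E}\!\left[X\log(X/Y)\right]=\tfrac{a}{A}\bigl[\psi(a+1)-\psi(a)-\psi(A+1)+\psi(A)\bigr],
\end{equation*}
and then apply the recursion $\psi(x+1)-\psi(x)=1/x$ to collapse this to a digamma-free expression that depends only on $a$ and $A$. Symmetrising $j\leftrightarrow j{+}1$ doubles this quantity. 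This is the single lemma doing all the work.

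Then I would apply this identity twice: once to each per-atom pair $(\beta_{j,i},\beta_{j+1,i})$ with parameter $a=\alpha\pi_i$, summing the result over $i$ to obtain a closed form for $\mathbb{E}[\mathrm{KL}(G_j\|G_{j+1})\mid G_0]$; and once to each of the three aggregated pairs in \eqref{eq: aggregate kl} with parameters $a\in\{\alpha\pi_{<l},\alpha\pi_l,\alpha\pi_{>l}\}$, summing to obtain $\mathbb{E}[\mathrm{aggKL}(l;G_j\|G_{j+1})\mid G_0]$. Using $\pi_{<l}+\pi_l+\pi_{>l}=1$ and the fact that $A=\alpha$ is the same for individual atoms and for aggregated groups, the two expressions should reduce to the same linear functional of the $\pi_i$'s. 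A final application of the tower property in $G_0$ then yields the unconditional equality claimed in the theorem.

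The main obstacle I expect is the rigorous handling of the infinite sum in \eqref{eq: symm KL}: interchanging expectation with the countable sum must be justified (the per-term expectation above is non-negative, so Tonelli applies, but one must check that the resulting sum is the right limit of the aggregated tail term $(1-\pi_{>l})$-style contribution). A secondary subtlety is ensuring that the collapse $\psi(x+1)-\psi(x)=1/x$ is applied consistently so that the per-atom sum and the three-atom aggregated sum really do match algebraically, rather than differing by a residual data-processing gap. If that gap fails to vanish identically, one would need to identify the hidden cancellation (e.g.\ by using the joint Dirichlet structure of $\boldsymbol{\beta}_j$ rather than just its one-dimensional marginals) and incorporate it explicitly before taking expectations.
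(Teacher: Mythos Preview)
Your route is genuinely different from the paper's. The paper does not compute either expectation; it subtracts \eqref{eq: aggregate kl} from \eqref{eq: symm KL}, claims the difference equals an expression of the form $f(\boldsymbol\beta_j)-f(\boldsymbol\beta_{j+1})$ (the display labelled \eqref{eq: kl minus proof}), and then invokes the fact that $\boldsymbol\beta_j$ and $\boldsymbol\beta_{j+1}$ are identically distributed to conclude that this difference has mean zero. Your plan, by contrast, is to evaluate each expectation in closed form via the Dirichlet aggregation property and the digamma identities, and then check that the two closed forms coincide.

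Be warned, however, that if you execute your plan carefully you will \emph{not} recover the claimed equality. Your lemma gives, for independent $X,Y\sim\mathrm{Beta}(a,A-a)$,
\[
\mathbb{E}\!\left[X\log\tfrac{X}{Y}+Y\log\tfrac{Y}{X}\right]
=\frac{2a}{A}\Bigl(\tfrac{1}{a}-\tfrac{1}{A}\Bigr)=\frac{2(A-a)}{A^2},
\]
so with $A=\alpha$ one gets $\mathbb{E}[\mathrm{aggKL}(l;G_j\|G_{j+1})\mid G_0]=\tfrac{2}{\alpha}\bigl[(1-\pi_{<l})+(1-\pi_l)+(1-\pi_{>l})\bigr]=4/\alpha$, whereas $\mathbb{E}[\mathrm{KL}(G_j\|G_{j+1})\mid G_0]=\tfrac{2}{\alpha}\sum_{i\ge1}(1-\pi_i)=+\infty$. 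The ``residual data-processing gap'' you worried about does not vanish. The discrepancy with the paper's argument is that the algebraic identity asserted in \eqref{eq: kl minus proof} is incorrect: the actual difference $\mathrm{KL}-\mathrm{aggKL}$ is symmetric in $(\boldsymbol\beta_j,\boldsymbol\beta_{j+1})$, not antisymmetric, and retains cross terms of the form $-\sum_{i}\beta_{j,i}\log\beta_{j+1,i}+(\sum_i\beta_{j,i})\log(\sum_i\beta_{j+1,i})$ that the paper drops. Those cross terms are precisely what prevent the expectation from vanishing. So your approach is sound; it simply exposes a flaw in the statement rather than proving it.
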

	\begin{proof}
	Subtract (\ref{eq: aggregate kl}) from (\ref{eq: symm KL})  gives
	\begin{small}
    \begin{align}
    	&~~~~\mathrm{KL}(G_j||G_{j+1}) - \mathrm{aggKL}(l;G_j||G_{j+1}) \label{eq: kl minus proof} \\
    	&= \sum_{i=1}^{l-1} \left( \beta_{j,i}\log \frac{\beta_{j,i}}{\sum_{i=1}^{l-1}\beta_{j,i}} -\beta_{j+1,i}\log \frac{\beta_{j+1,i}}{\sum_{i=1}^{l-1}\beta_{j+1,i}}\right) \nonumber  \\
    	&~~~+ \sum_{i=l+1}^{\infty} \left( \beta_{j,i}\log \frac{\beta_{j,i}}{\sum_{i=l+1}^{\infty}\beta_{j,i}} -\beta_{j+1,i}\log \frac{\beta_{j+1,i}}{\sum_{i=l+1}^{\infty}\beta_{j+1,i}}\right). \nonumber
    	\end{align}
	\end{small}
	Since $\beta_{j,i}$ and $\beta_{j+1,i}$ has the same distribution for all $i = 1,2,3,...$, (\ref{eq: kl minus proof}) must has expectation $0$ provides $\log \sum_{i=l+1}^{\infty}\beta_{j,i}$ and $\sum_{i=l+1}^{\infty} \beta_{j+1,i}$ are finite. But this is obvious since $\sum_{i=l+1}^{\infty}\beta_{j+1,i} = 1 - \sum_{i=1}^{l}\beta_{j,i}$ and $\sum_{i=l+1}^{\infty}\beta_{j+1,i} = 1 - \sum_{i=1}^{l}\beta_{j+1,i}$.
	\end{proof}
	In light of Theorem \ref{thm: 1}, we are safe to set $B = B^*$. The details of  sampling  $G_{j+1}$ is given in the next subsection.
    
	\subsection{How to sample $G_{j+1}$ conditional on $G_j$}
	
	Suppose $G_0$, $G_j$ and $\beta_{j+1,1:l-1}$ is known, and we want to sample $\beta_{j+1,l}$.  Observe (\ref{eq: aggregate kl}) and with a few algebraic calculation, the aggregated symmetric KL divergence constraint can be restated as 
	\begingroup
	\footnotesize
	\begin{equation}\label{eq: inequality}
	\begin{aligned}
	&- \left( 1 - \sum_{i=1}^{l}\beta_{j,i} \right) \log \left( 1 - \sum_{i=1}^{l-1} \beta_{j+1,i} - \beta_{j+1,l} \right)  \\
	& + \left( 1 - \sum_{i=1}^{l-1}  \beta_{j+1,l} -\beta_{j+1,i}\right) \log \frac{1 - \sum_{i=1}^{l-1}\beta_{j+1,i} - \beta_{j+1,l}}{\sum_{i=l+1}^{\infty}\beta_{j,i}}\\
	&\beta_{j+1,l}\log \frac{\beta_{j+1,l}}{\beta_{j,l}} -\beta_{j,l} \log \beta_{j+1,l} \leq B - C,
	\end{aligned}
	\end{equation} 
	\endgroup
	where $C$ is  a constant with respect to $\beta_{j+1,l}$, which is
	$$
	\begin{aligned}
	C &= \left(\sum_{i=1}^{l-1} \beta_{j,i}\right) \log \frac{\sum_{i=1}^{l-1}\beta_{j,i}}{\sum_{i=1}^{l-1}\beta_{j+1,i}} + \beta_{j,l} \log \beta_{j,l} \\
	&+ \left(\sum_{i=1}^{l-1} \beta_{j+1,i}\right) \log \frac{\sum_{i=1}^{l-1}\beta_{j+1,i}}{\sum_{i=1}^{l-1}\beta_{j,i}}.
	\end{aligned}
	$$
	Note the function
	\begingroup
	\small
	$$
	\begin{aligned}
	f(x)  &=x\log \frac{x}{\beta_{j,l}} -\beta_{j,l}\log x \\
	& -  \left( 1 - \sum_{i=1}^{l}\beta_{j,i} \right) \log \left( 1 - \sum_{i=1}^{l-1} \beta_{j+1,i} - x \right)  \\
	& + \left( 1 - \sum_{i=1}^{l-1}  \beta_{j+1,l} -x\right) \log \frac{1 - \sum_{i=1}^{l-1}\beta_{j+1,i} - x}{\sum_{i=l+1}^{\infty}\beta_{j,i}}
	\end{aligned}
	$$
	\endgroup
	is convex in the interval $\left(0, 1 - \sum_{i=1}^{l-1}\beta_{j+1,i}\right)$, the equality
	\begin{equation}\label{eq: equality}
		f(x) = B - C
	\end{equation}
	has at most two roots. Showing in Figure \ref{fig: solutions}, inequality (\ref{eq: inequality}) has solution of form
	\begin{itemize}
		\item $\beta_{j+1,l} \in (r_1, r_2)$ if (\ref{eq: equality}) has two solutions $r_1$ and $r_2$,
		\item $\beta_{j+1,l} \in (0,r_1)$ or $(r_1,1)$ if (\ref{eq: equality}) has only one solution $r_1$.
		\item If (\ref{eq: equality}) has no roots, either the entire interval $\left(0, 1 - \sum_{i=1}^{l-1}\beta_{j+1,i}\right)$ is the solution or $B$ is too tight that there is no discrete measure satisfies condition (\ref{codition 2}).
	\end{itemize}
	
	 According to the relationship of $\beta_{j+1,l}$ and $\tilde{\beta}_{j+1,l}$ shown in (\ref{eq: G_j}), the truncating boundary for $\tilde{\beta}_{j+1,l}$ are $r_1 / \prod_{i=1}^{l-1}(1 - \tilde{\beta}_{j+1,i})$ and  $r_2 / \prod_{i=1}^{l-1}(1 - \tilde{\beta}_{j+1,i})$, where $r_1$ is the lower bound and $r_2$ is the upper bound and they can be $0$ or $1$. Combine this constraint and the sampling method of $\beta_{j}$ in (\ref{eq: G_j})(\ref{eq: sample beta origin}) gives the solution of Problem \ref{problem : 2} in Algorithm \ref{algo: 1}.
	
	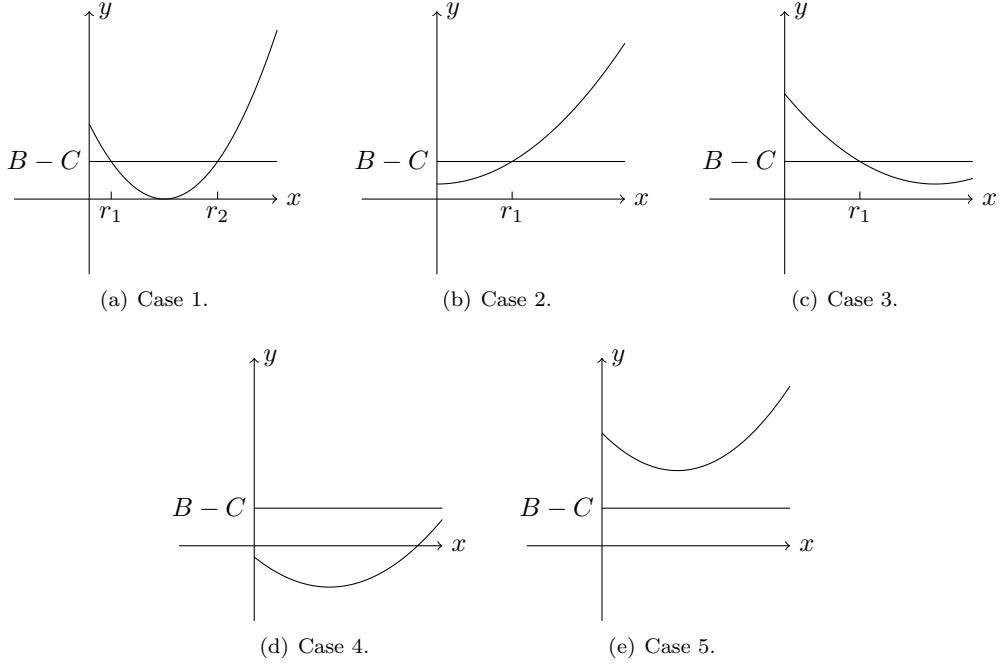
\begin{figure*}
		\centering
		\begin{subfigure}[Case 1.]
		{	
		
		\begin{tikzpicture}
			\draw[->] (-1,0) -- (2.5,0) node [right] {$x$};
			\draw[->] (0,-1) -- (0,2.5) node [right] {$y$};
			\draw[scale=1,domain=0:2.5,smooth,variable=\x,black] plot ({\x},{(\x-1) * (\x - 1)});
			\draw (0,0.5) -- (2.5,0.5);
			\node at (1.7071,-0.2) {$r_2$}; 
			\node at (0.2929,-0.2) {$r_1$};
			\draw (1.7071,0) -- (1.7071,0.1);
			\draw (0.2929,0) -- (0.2929,0.1);
			\node at (-0.6, 0.5) {$B-C$};
			\end{tikzpicture}
		}
		\end{subfigure}
		\begin{subfigure}[Case 2. ]
			{
				\begin{tikzpicture}
				\draw[->] (-1,0) -- (2.5,0) node [right] {$x$};
				\draw[->] (0,-1) -- (0,2.5) node [right] {$y$};
				\draw[scale=1,domain=0:2.5,smooth,variable=\x,black] plot ({\x},{(\x) * (0.3 *\x) + 0.2});
				\node at (-0.6, 0.5) {$B-C$};
				\draw (0,0.5) -- (2.5,0.5);
				\draw (1,0) -- (1,0.1);
				\node at (1,-0.2) {$r_1$};
				\end{tikzpicture}
			}
		\end{subfigure}
		\begin{subfigure}[Case 3. ]
			{
				\begin{tikzpicture}
				\draw[->] (-1,0) -- (2.5,0) node [right] {$x$};
				\draw[->] (0,-1) -- (0,2.5) node [right] {$y$};
				\draw[scale=1,domain=0:2.5,smooth,variable=\x,black] plot ({\x},{0.3 * (\x-2) * (\x - 2) + 0.2});
				\node at (-0.6, 0.5) {$B-C$};
				\draw (0,0.5) -- (2.5,0.5);
				\draw (1,0) -- (1,0.1);
				\node at (1,-0.2) {$r_1$};
				\end{tikzpicture}
			}
		\end{subfigure}
		\begin{subfigure}[Case 4. ]
			{
				\begin{tikzpicture}
				\draw[->] (-1,0) -- (2.5,0) node [right] {$x$};
				\draw[->] (0,-1) -- (0,2.5) node [right] {$y$};
				\draw[scale=1,domain=0:2.5,smooth,variable=\x,black] plot ({\x},{0.4*(\x-1) * (\x - 1)-0.55});
				\node at (-0.6, 0.5) {$B-C$};
				\draw (0,0.5) -- (2.5,0.5);
				\end{tikzpicture}
			}
		\end{subfigure}
		\begin{subfigure}[Case 5. ]
			{
				\begin{tikzpicture}
				\draw[->] (-1,0) -- (2.5,0) node [right] {$x$};
				\draw[->] (0,-1) -- (0,2.5) node [right] {$y$};
				\draw[scale=1,domain=0:2.5,smooth,variable=\x,black] plot ({\x},{0.5 * (\x-1) * (\x - 1) + 1});
				\node at (-0.6, 0.5) {$B-C$};
				\draw (0,0.5) -- (2.5,0.5);
				\end{tikzpicture}
			}
		\end{subfigure}
	\caption{The cases of solution (\ref{eq: inequality}). The feasible interval for case (a)-(e) is $(r_1,r_2)$, $(0,r_1)$, $(r_1,1)$, $(0,1)$, $\emptyset$ respectively.} \label{fig: solutions}
	\end{figure*}
	
	\begin{algorithm}
		\caption{Sampling $G_{j+1}$ with aggregated symmetric KL divergence constraint.} \label{algo: 1}
		\begin{algorithmic}[1]
			\REQUIRE Base measure $G_0$, predecessor $G_j$, concentration parameter $\alpha$ and the symmetric KL divergence bound $B$.
			\ENSURE A discrete measure $G_{j+1}$ satisfying condition (\ref{codition 2}).
			
			\FOR {$l = 1,2,3,...$}
				\STATE Derive the roots of (\ref{eq: equality}). 
				\STATE Compute truncating boundary for $\tilde{\beta}_{j+1,l}$.
		     	\STATE Sample $\tilde{\beta}_{j+1,l}$ from a truncated Beta
		     	 distribution.
			\ENDFOR
			\FOR {l = 1,2,3,...}
			    \STATE Compute $\beta_{j+1,l}$.
			\ENDFOR
		\end{algorithmic}
	\end{algorithm}

	\section{Inference}
	
	Note in Algorithm \ref{algo: 1} we sample $G_{j+1}$ directly without the explicit  likelihood, hence we cannot compute the posterior of $G_{j+1}$ analytically. Consequently,  we have to use particle filtering to sample the posterior distribution of $G_{j}$ for $j = 1,...,M$.

	\subsection{Initialization}
	
	According to the definition of Dirichlet process \cite{Ferguson1973}, given concentration parameter $\alpha$ and base measure $G_0$, the expectation of  distributions sampled from $\mathrm{DP}(\alpha, G_0)$ is $G_0$. Let $d_{j,h}$ be an observation in phase $j$, then the distribution of $d_{j,h}$  is modeled as
	\begin{align}
		d_{j,h} \sim G_j, \quad G_j \sim \mathrm{DP}(\alpha, G_0).
	\end{align} 
	Followed by the deduction above and integrate out $G_j$, the distribution of $d_{j,h}$ is
	\begin{align}
		d_{j,h} \sim \int G_j(d_{j,h}) dG_{j} = G_0.
	\end{align}
	Hence the posterior distribution of $G_0$ can be sampled with help of (\ref{eq:sample G_0}). Suppose the prior of $G_0$ is a Dirichlet process $\mathrm{DP}(\gamma, H)$ and $G_0$ has the form of (\ref{eq: G_0}). Let  $c_i$ be a set of the observations taking on cluster $i$ and $m_i = |c_i|$, then the posterior of $\tilde{\pi}_i$ is
	\begin{equation*}
		p(\tilde{\pi}_i|c_{i},c_{i+1},..., \gamma) \sim \mathrm{Beta}\left(1 + m_i,\gamma + \sum_{l=i+1}^\infty m_l\right),
	\end{equation*} 
	and the posterior of $\pi_i$ is 
	\begin{equation*}
		\pi_i = \tilde{\pi}_i \prod_{l=1}^{i-1}(1 - \tilde{\pi}_l).
	\end{equation*}
	The posterior of $\phi_i$ is 
	\begin{equation*}
		p(\phi|c_i, d_{j,\cdot}) \propto H(\phi_i)\prod_{h \in c_i}F(d_{j,h}|\phi_i),
	\end{equation*}
	where $d_{j,\cdot}$ denote all the observations in phase $j$ and $F(\cdot)$ is the likelihood function.
	
	\subsection{Update of $G_j$}
	
	Given $G_0$, $G_j$ and $\alpha$, we use particle filtering to sample the posterior of $G_j$. With the help of Algorithm \ref{algo: 1}, we give a proposal $G_{j+1}$. The weight of this proposal is a product of probabilities
	\begin{equation} \label{eq: wights}
		w = \prod_{i} \beta_i^{m_{j,i}},
	\end{equation} 
	where $m_{j,i}$ is the number of observations in phase $j$ that take cluster $i$. Using particle filtering, our sampling algorithm of $G_{1:M}$ is given in Algorithm \ref{algorithm 2}.
	
	\begin{algorithm}
		\caption{Particle filtering for $G_{1:M}$} \label{algorithm 2}
		\begin{algorithmic}[1]
			\REQUIRE The base measure $G_0$, concentration parameter $\alpha$, symmetric KL divergence bound $B$, observations $d$ and a large integer $N$.
			\ENSURE Discrete measures $G_{1:M}$ satisfying condition (\ref{codition 2}) for each pair of successive $G_j$ and $G_{j+1}$.
			
			\STATE For $n = 1,...,N$, sample $G_1$ by (\ref{eq: G_j})(\ref{eq: sample beta origin}).
			\FOR {$j = 2,...,M$}
				\STATE For $n = 1,...,N$, sample $G_j^{(n)}$ with Algorithm \ref{algo: 1}.
				\STATE Compute importance weights $w_j^{(n)}$ by (\ref{eq: wights}).
				\STATE Normalize importance wights.
				\STATE Resample $G_j^{(n)}$ with importance weights.
			\ENDFOR
		\end{algorithmic}
	\end{algorithm}
	
	\subsection{Update of $z$}
	
	The latent variable $z_{j,h}$ is used to indicate which cluster the observation $d_{j,h}$ is from. By the definition of $G_j$, the probability of $z_{j,h} = i$ is $\beta_{j,i}$. Given $F(\cdot)$ and $\phi_i$, the likelihood is $F(d_{j,h}|\phi_i)$. Combining these together gives the posterior probability of $z_{j,h}$, which is proportional to
	\begin{equation*}
		p(z_{j,h} = i| \phi_i, \beta_{j,i}) \propto \beta_{j,i} F(d_{j,i}|\phi_i),
	\end{equation*}
	where $\beta_{j,i}$ is the weight of the atom $\phi_i$  in $G_j$.
	
	\subsection{Update of the hyper-parameters $\gamma, \alpha$}
	
	For hyper-parameters $\gamma$ and $\alpha$, we simply put Gamma distribution as  priors on them, namely, $\mathrm{Gamma}(a_\gamma, b_\gamma)$ and  $\mathrm{Gamma}(a_\alpha, b_\alpha)$. The posterior of $\gamma$ and $\alpha$ are
	\begin{small}
	\begin{align*}
			p(\gamma|a_\gamma, b_\gamma, k, m) &\propto \gamma^{a_\gamma - 2} e^{-b_\gamma \gamma}(\gamma + m)\int_0^1 u^\gamma (1 - u)^{m-1} du,\\
			p(\alpha|a_\alpha, b_\alpha, k, m) &\propto \alpha^{a_\alpha - 2} e^{-b_\alpha \alpha}(\alpha + m)\int_0^1 u^\alpha (1 - u)^{m-1} du.
		\end{align*}
	\end{small}

	respectively, where $m$ is the number of observations and $k$ is the number of distinct atoms. By adding auxiliary variable $u$, it renders a mixture of Gammas for $\gamma$ and $\alpha$.
	For details of the sampling method please refer to \cite{West1992}.
	
\section{Experiments}
	
	We validate our smoothed HDP model using both synthetic and real dataset. MATLAB code of our experiments are available in the following website: https://github.com/llcc402/MATLAB-codes.
		
	We assume the number of activated atoms is at most $K = 100$, hence the definition of the symmetric KL divergence (\ref{eq: symm KL}) becomes:
	\begin{small}
	\begin{equation} \label{eq: kl practical}
			\mathrm{KL}(G_{j}||G_{j+1}) = \sum_{i=1}^{K} \beta_{j,i} \log \frac{\beta_{j,i}}{\beta_{j+1,i}} + \sum_{i=1}^{K} \beta_{j+1,i} \log \frac{\beta_{j+1,i}}{\beta_{j,i}}.
		\end{equation}
	\end{small}

	In case some weights of $G_j$ become close to zero, causing the KL to become arbitrarily large, we set a minimum lower bound for every component weight to be $\epsilon = 1\mathrm{e}-5$.
	
	\subsection{Simulations on synthetic data set}
	
	Since the computation of the symmetric KL divergence only requires the knowledge of weights of the discrete measures, we discard the positions and sample $\mathrm{GEM}(\gamma)$ instead. In our experiments, we set $\gamma = 5$. Conditioning on $G_0$, we sample $G_1 \sim \mathrm{DP}(\alpha, G_0)$, we then sample $G'_2 \sim \mathbb{P}(G_2 | \alpha, G_0, G_1)$ using our sHDP. We compare our model with HDP in order to show its smoothness effect. In HDP, while having $G_0$ and $G_1$ sampled in the same way as sHDP, $G_2$ is instead to be generated independently, i.e., $G_2 \sim \mathrm{DP}(\alpha, G_0)$. In all the experiments we set $\alpha = 1$.
	
	In sHDP, we set the KL bound $B = 3$, and compare the symmetric KL divergence between $G_1, G_2$ and $G_1, G'_2$. We repeat this procedure for $1000$ times and compare the symmetric KL divergence by (\ref{eq: kl practical}), and the result is shown in Figure \ref{fig: shdp v.s. hdp}. From this figure, we can see that the symmetric KL divergence of successive measures are much smaller than the traditional model. The mean of the values of sHDP is about $3$, just as Theorem \ref{thm: 1} illustrated, while the mean of traditional HDP is about $10$.
	\begin{figure*}
		\centering
		\begin{subfigure}[A comparison of the symmetric KL divergence for sHDP and HDP. Left is the smoothed sampling result and right is the traditional sampling result.] 
		{ \label{fig: shdp v.s. hdp}
			\includegraphics[height = 120pt, width = 240pt]{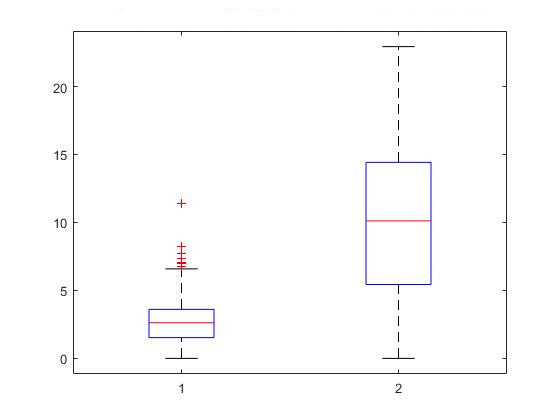}
		}
		\end{subfigure}
		\begin{subfigure}[The KL divergences sHDP when the bound $B$ changes. From left to right the bound $B$ is set from $1$ to $10$.]
		{\label{fig: change bound}
			\includegraphics[height = 120pt, width = 240pt]{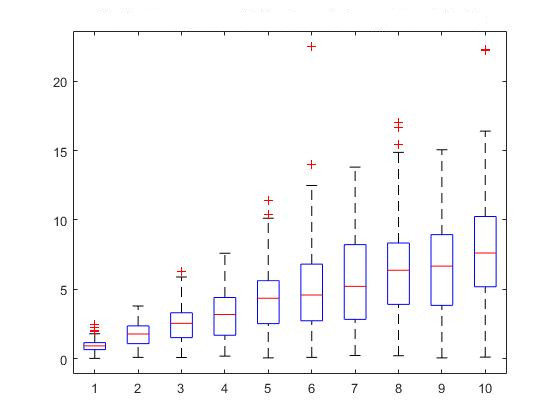}
		}
		\end{subfigure}
		\caption{Comparing the symmetric KL divergences. }
	\end{figure*}
	
	In the second experiment, we show how the mean of KL varies with the bound $B$ we choose. By setting $B$ from $1$ to $10$, we sample $G_1, G'_2$ $100$ times and compute the symmetric KL divergence between them. This is shown in Figure \ref{fig: change bound}. We show that the average symmetric KL divergence of the two distributions have increased when the bound value increases, which leads to an expected outcome.
	
	The last simulation is about using the time series data with the following setting: We first set $\gamma = 5$, $\alpha = 1$ and $B = 1$, then we generate $G_1,...,G_{20}$ using sHDP. After that, we add Gamma noise to the distributions $G_1,...,G_{20}$.  The noise is simulated as follows. First generate $K \times 20$ independent random variables distributed as $\mathrm{Gamma}(0.03, 1)$, then add them to $G_1,...,G_{20}$. Lastly, we normalize the distributions. With these noisy distributions, we generate $50$ positive integers from each of them representing the component indices which are seen as the observations of the model in order to infer the posterior of $G_{1:20}$. We sample the posterior from both HDP and sHDP. The symmetric KL divergence of successive distributions is shown in Figure \ref{fig: successive kl}. It can be seen that the symmetric KL values of sHDP are below $1$ while that of HDP are much larger, from about $3$ to $7$. We also compared the distance between the theoretical value of $G_{1:20}$ and the posterior of both models in which the symmetric KL divergence is used as the measure of distance. It can be seen that in the first $10$ time intervals, the distances are similar, however, in the last $10$ sHDP results much smaller KL distance than that of the HDP.
	
	\begin{figure*}
		\begin{subfigure}[The symmetric KL divergence for successive distributions. The blue solid line is the values of the KL divergence for sHDP, and the red dotted line is the values of the KL divergence for HDP.]
		{\label{fig: successive kl}
			\includegraphics[height = 120pt, width = 250pt]{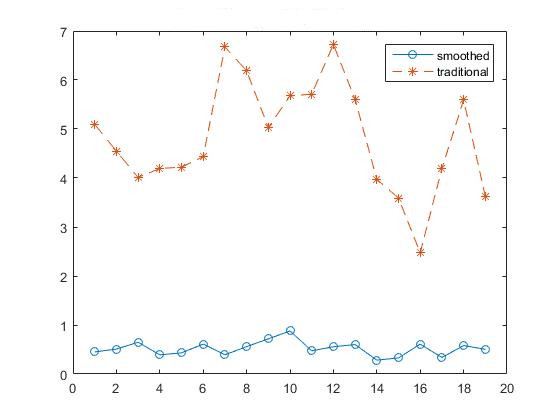}
		}
		\end{subfigure}
		\begin{subfigure}[The distance between theoretic value $G_{1:20}$ with the posterior. The blue solid line is the distance between theoretical $G_{1:20}$ and the posterior of sHDP, and the red dotted line is the distance between theoretical $G_{1:20}$ and the posterior of  HDP.]
		{\label{fig: kl_theoretic_post}
			\includegraphics[height = 120pt, width = 250pt]{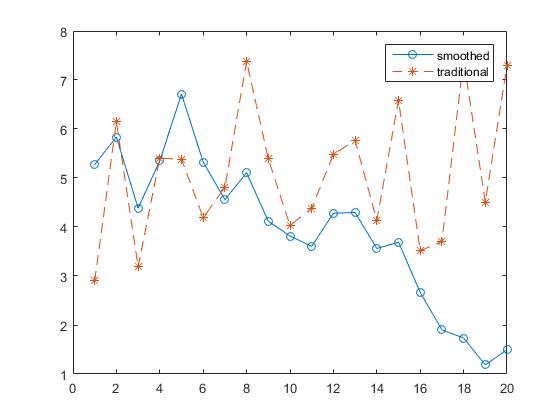}
		}
		\end{subfigure}
		\caption{The comparison of the successive KL divergences and the accuracy for the phased data set.}
	\end{figure*}
	
	\subsection{Applications on real data set}
	
	 We parsed the web page of PAMI and collected keywords from approximately $4000$ papers published from 1990 to 2015. The keywords from one paper is seen as a document and the years are considered to be the phases. Similarly to \cite{Lin2010}, we transform each of the documents into a $12$ dimensional vector using the method proposed by \cite{Jordan2004}. We compute the similarity between two documents, which is the number of shared words divided by the total number of words in these two documents. Then we derive $12$ eigen vectors for the normalized graph Laplacian \cite{Planck2006} and the $12$ column matrix of the eigen vectors are seen as the data set. For pre-processing, we change the standard deviation of the columns to be $1$.
	 
	 We assume the likelihood of the data is normally distributed with fixed variance and performed a hierarchical clustering of the data set. The clusters are considered to be the topics of the corpus. In the experiment, we set $\gamma = \alpha = 5$, and the KL bound $B = 3$, and we sample the posterior for $500$ iterations. The mixing measures $G_{1:26}$ represents the weights of the keyword clusters for each year. 
	 
	 When we compare the outcomes from both sHDP and HDP, we found that the clusters for each year are similar for both models, but the value of successive symmetric KL divergence can be quite different. The mean of the successive symmetric KL of sHDP is mostly blow $3$ while the mean of HDP is around $5$. The boxplot of the symmetric KL divergence is shown in Figure \ref{fig: pami}. Moreover, we track the components weights for the $8$ most significant clusters of $G_0$ through $G_1$ to $G_{26}$ in figure \ref{fig: fluc}. In this figure, we can see that when the weights of $G_0$ are large (the first $5$ sub-figures in Figure \ref{fig: fluc}), both HDP and sHDP show similar degree of fluctuation in its KL divergence. However, for less significant weight components of $G_0$, (the last few $3$ figures), the smoothing effect of sHDP can be more obviously shown. This illustrates the fact that sHDP can suppress smaller components over times, and hence  help achieve smoothness. 
	    
	 \begin{figure}
	 	\centering
	 	\includegraphics[height = 120pt, width = 260pt]{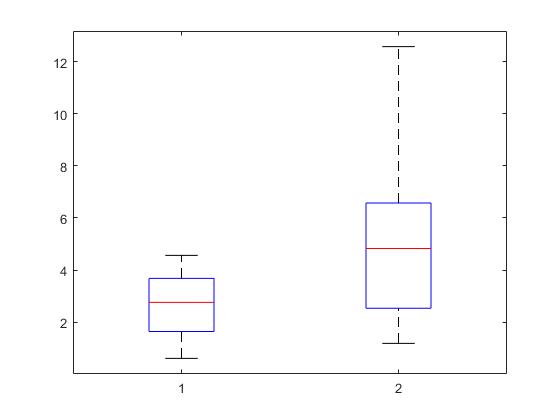}
	 	\caption{The symmetric KL of successive measures for sHDP and HDP. Left is sHDP and right is HDP.}\label{fig: pami}
	 \end{figure}
	 
	 \begin{figure*}
	 \centering
	 \begin{subfigure}
	 {
	 \includegraphics[height = 90pt, width = 130pt]{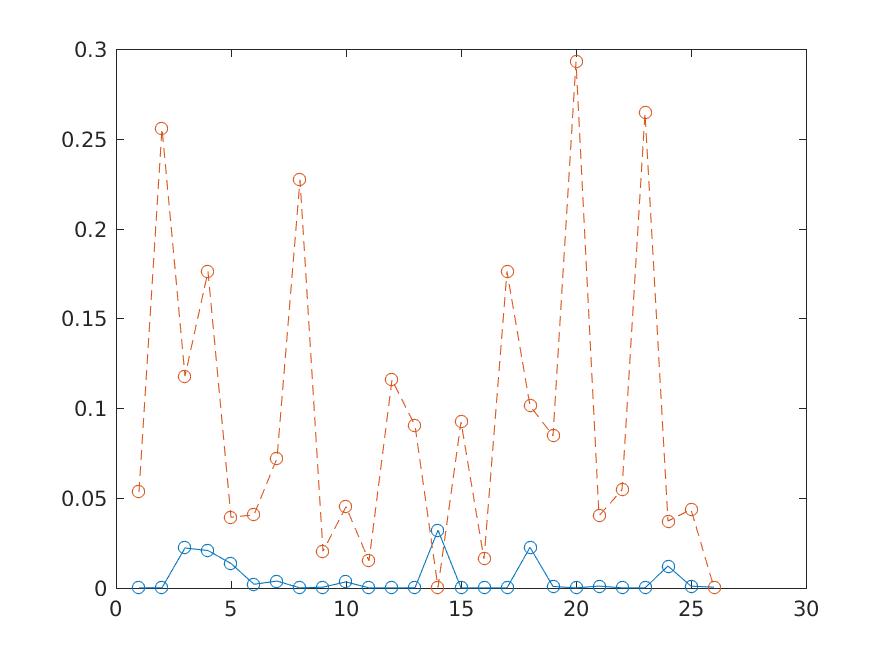}
	 }
	 \end{subfigure}
	 \begin{subfigure}
	 {
	 \includegraphics[height = 90pt, width = 130pt]{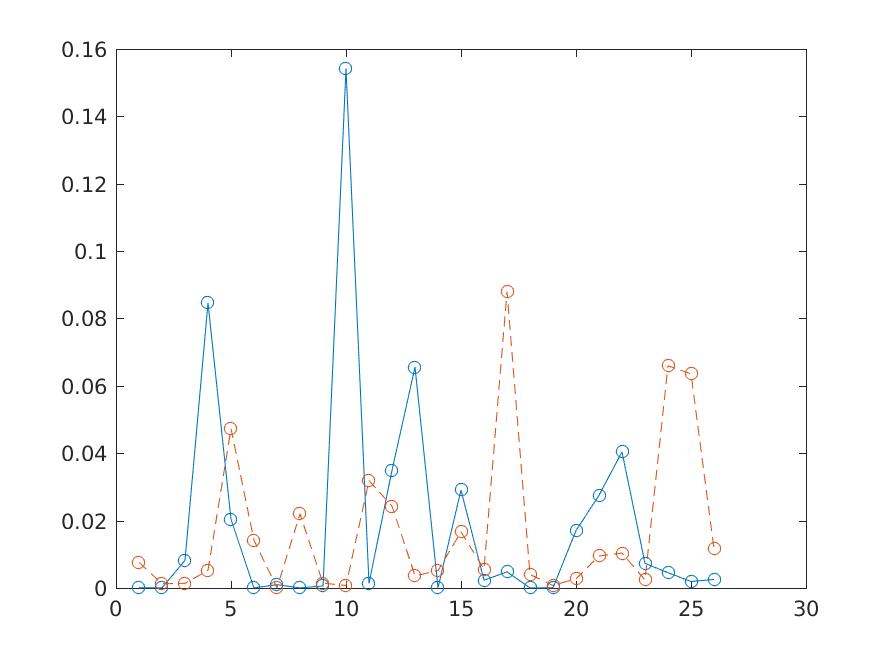}
	 }
	\end{subfigure}
	 \begin{subfigure}
	 {
	 \includegraphics[height = 90pt, width = 130pt]{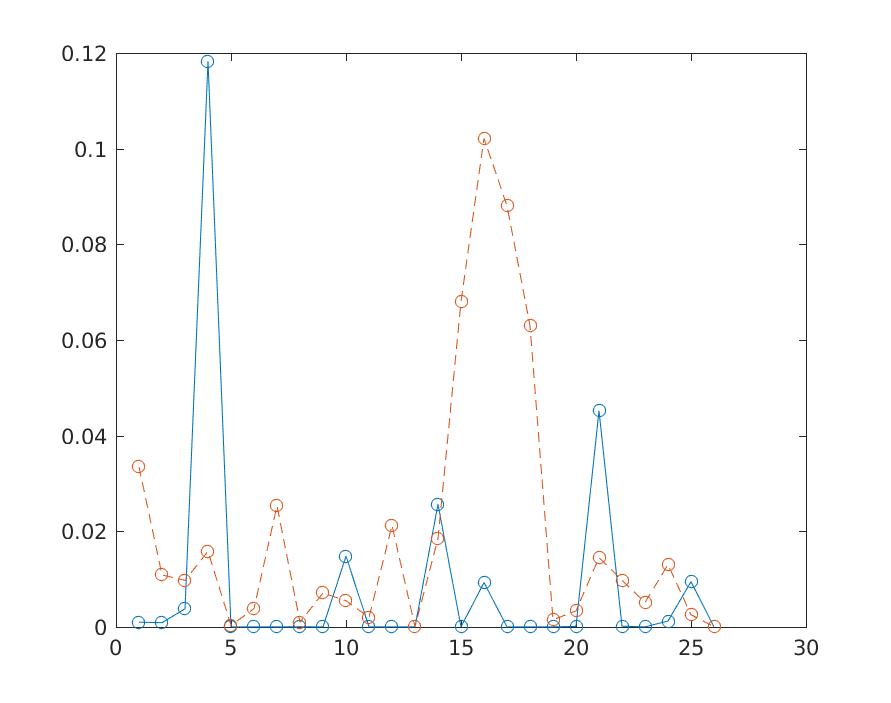}
	 }
	 \end{subfigure}
	 \begin{subfigure}
	 {
	 \includegraphics[height = 90pt, width = 130pt]{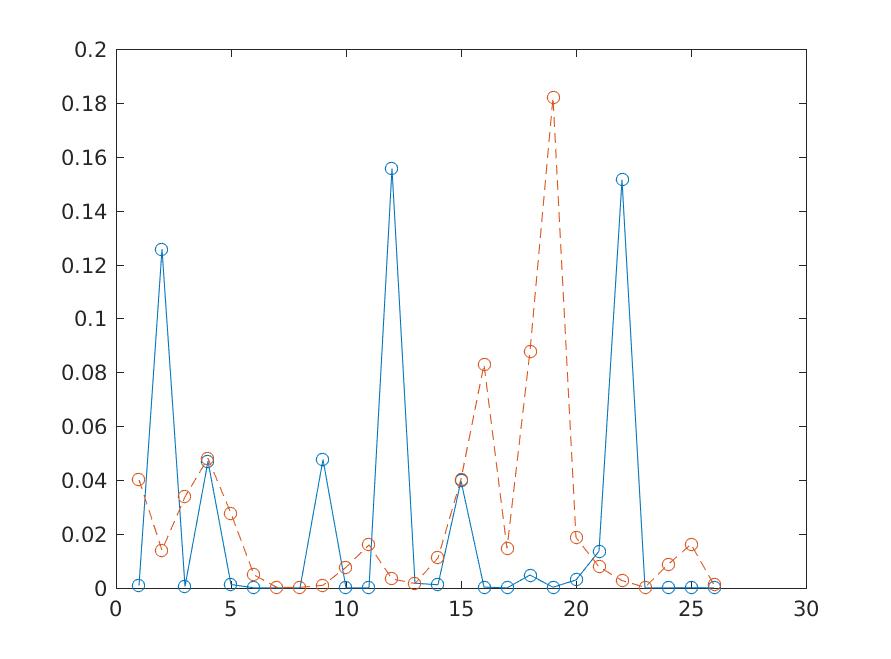}
	  }
	  \end{subfigure}
	 \begin{subfigure}
	 {
	 \includegraphics[height = 90pt, width = 130pt]{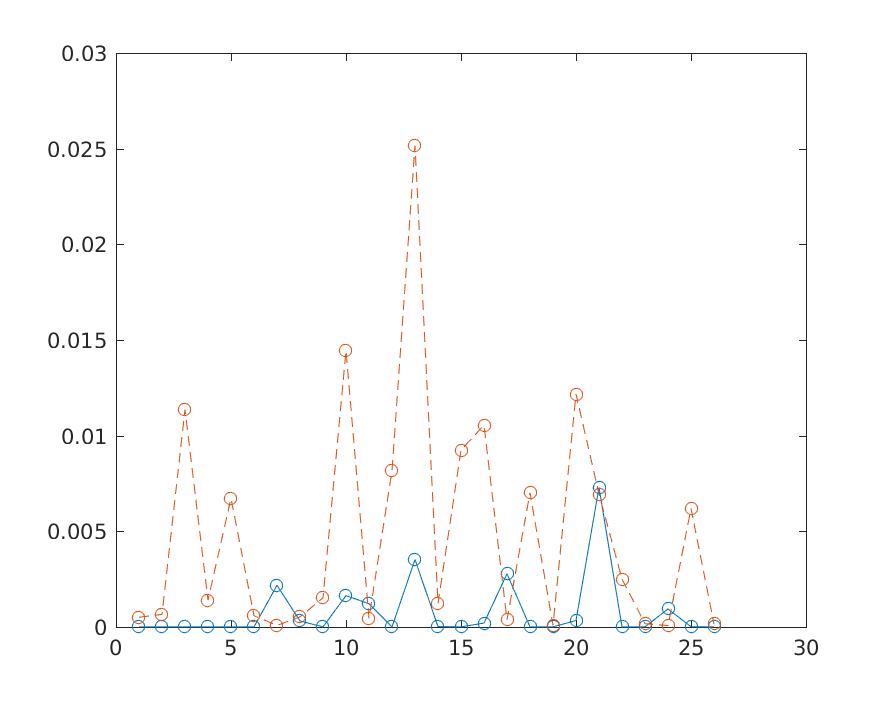}
	 }
	  \end{subfigure}
	  \begin{subfigure}
	 {
	 \includegraphics[height = 90pt, width = 130pt]{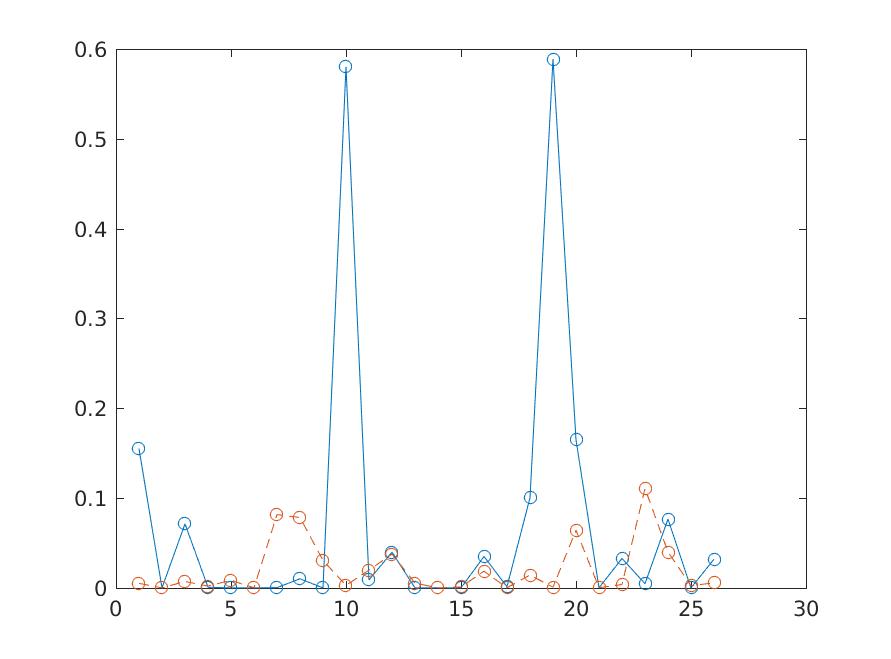}
	 }
	 \end{subfigure}
	 \begin{subfigure}
	  {
	 \includegraphics[height = 90pt, width = 130pt]{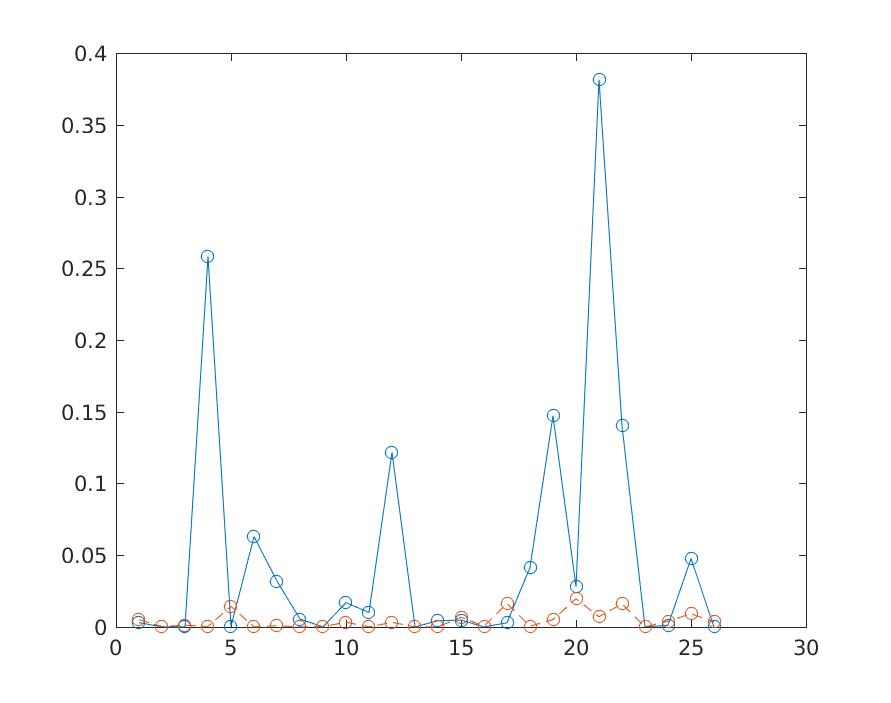}
	 }
	 \end{subfigure}
	 \begin{subfigure}
	 {
	 \includegraphics[height = 90pt, width = 130pt]{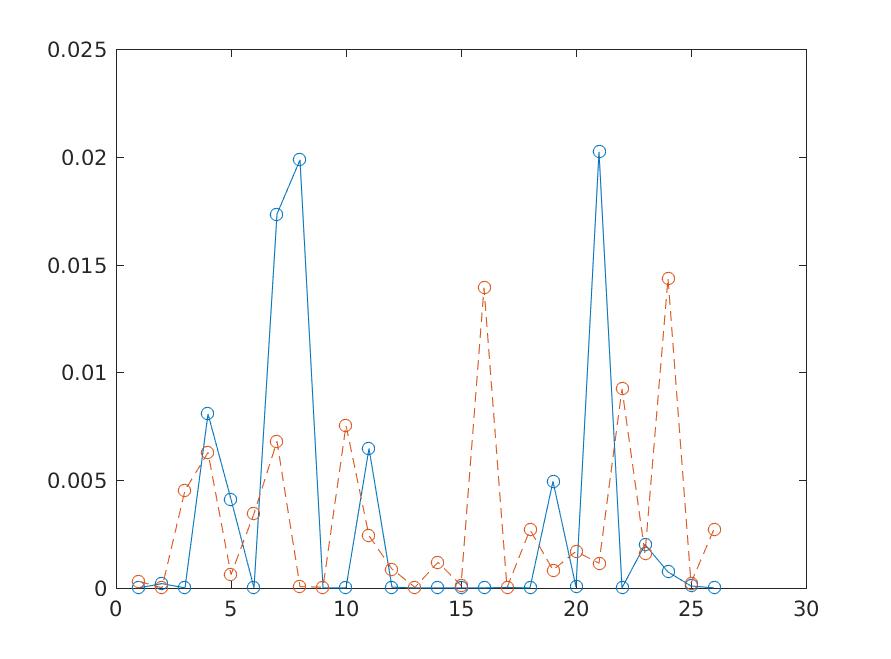}
	 }
	 \end{subfigure}
	 \caption{The fluctuation of the weights in $G_2$. From upper left to lower right are corresponding to the clusters with weights in $G_0$ in the descending order. The red dotted lines represent the weights of $G_2$ using sHDP and the blue solid represent the weiths of $G_2$ using HDP.}\label{fig: fluc}
	 \end{figure*}
	 
	\section{Conclusion and discussion}
	
We proposed a smoothed Hierarchical Dirichlet Process (sHDP), in which we add a smoothness constraint to the lower layer of a Dirichlet Process (DP), in order to cater to many settings where temporal and/or spatial smoothness between the underlying measures is required. We used symmetric KL divergence for this constraint. Although it is conceptually sound, the traditional symmetric KL in its original form can not be practically applied since the symmetric KL divergence between two countable discrete measures is an infinite sum of real numbers. For this reason, in our paper, we proposed to use aggregated symmetric KL divergence as an alternative measure. We proved that this substitution has the same expectation of the original symmetric KL divergence, and hence can be used appropriately as an alternative bound.

We showed that given a measure at time $t-1$, i.e., $G_j$, a base measure $G_0$ and the concentration parameter $\alpha$, the sampling of the weight of $G_{j+1}$, i.e.,  $\beta_{j+1,i}$ can be derived by using a truncated Beta distribution; We show that the truncation boundaries can be solved as roots of a convex function. This has made the solution space easily obtainable.

We show empirically the effect of the bound over the distribution $G_{j+1}$ in figure \ref{fig: change bound}: we vary the bound value from $1$ to $10$ for which we show that the average symmetric KL divergence of the sampled successive distributions have also increased accordingly, which is in line with our expectation. The application of our model to the real dataset shows the smoothness effect of our model, which can be illustrated in both figure \ref{fig: pami} and \ref{fig: fluc}. The box plot in \ref{fig: pami} show that KL divergence between successive measures, $G_j$ and $G_{j+1}$ of our model is much less than that of the HDP. 

We also have analysed the results further to gain some insights into the component-wise smoothness: We choose the eight most significant components of the base measure $G_0$, and then  tracked how its weight changes from $G_1$ to $G_{26}$ (as we have 26 years of data). The results are shown in figure \ref{fig: fluc}. It is interesting to note that we have a gradual increase in smoothness when its corresponding base measure component is decreased in  value.

We also like to point out that our proposed method actually violates de Finetti's theorem about the exchangeable random variables \cite{Ardus1985}. According to de Finetti's theorem, the prior exists only if the random variables are infinitely exchangeable. However, our assumption on the dependence of successive distributions actually made the prior/base measure $G_0$ non-existent.

In the current form, the non traceability of the conditional $\mathbb{P}(G_{j+1}| G_j, G_0 )$ require us to use particle filter as its inference method. Although it suffices in the settings of our testing dataset which is comprised of only 26 time intervals, it nonetheless can be a computation bottleneck when the application requires us to have much more granular time intervals. Therefore, in our future work, we will experiment with stochastic inference methods for time sequences, similar to that of \cite{Matthew2014}.

	\bibliographystyle{plain}
	
	\bibliography{library}

\end{document}